\definecolor{Klein_Blue}{rgb}{0.1215,0.0941,0.752}
\title{On Robust Learning from Noisy Labels: A Permutation Layer Approach}
\author{
    Salman Alsubaihi\textsuperscript{\rm 1}\equalcontrib,
    Mohammed Alkhrashi\textsuperscript{\rm 1}\equalcontrib,
    Raied Aljadaany\textsuperscript{\rm 1},
    Fahad Albalawi\textsuperscript{\rm 1},
    Bernard Ghanem\textsuperscript{\rm 2},
}
\pgfplotsset{compat=newest}
\newcommand{\permmatrix}[1]{\xoverline{P}_{\!\!\boldsymbol{\alpha}^{#1}}}
\newcommand{\alphaInit}{I_{\alpha}}
\newcommand{\myPermLR}{\eta_{\alpha}}
\newtheorem{thm}{Theorem}
\newtheorem{prop}[thm]{Proposition}
\newtheorem{definition}{Definition}
\newtheorem{assumption}{Assumption}
\DeclareMathAlphabet{\pazocal}{OMS}{zplm}{m}{n}
\renewenvironment{proof}{\noindent\textbf{Proof.}}{\qed}
\newsavebox\myboxA
\newsavebox\myboxB
\newlength\mylenA
\newcommand*\xoverline[2][0.75]{%
    \sbox{\myboxA}{$\m@th#2$}%
    \setbox\myboxB\null
    \ht\myboxB=\ht\myboxA%
    \dp\myboxB=\dp\myboxA%
    \wd\myboxB=#1\wd\myboxA
    \sbox\myboxB{$\m@th\overline{\copy\myboxB}$}
    \setlength\mylenA{\the\wd\myboxA}
    \addtolength\mylenA{-\the\wd\myboxB}%
    \ifdim\wd\myboxB<\wd\myboxA%
       \rlap{\hskip 0.5\mylenA\usebox\myboxB}{\usebox\myboxA}%
    \else
        \hskip -0.5\mylenA\rlap{\usebox\myboxA}{\hskip 0.5\mylenA\usebox\myboxB}%
    \fi}
\pgfplotsset{compat=newest}
\begin{document}

\maketitle

\begin{abstract}
The existence of label noise imposes significant challenges (e.g., poor generalization) on the training process of deep neural networks (DNN). As a remedy, this paper introduces a permutation layer learning approach termed PermLL to dynamically calibrate the training process of the DNN subject to instance-dependent and instance-independent label noise. The proposed method augments the architecture of a conventional DNN by an \emph{instance-dependent permutation layer}. This layer is essentially a convex combination of permutation matrices that is dynamically calibrated for each sample. The primary objective of the permutation layer is to correct the loss of noisy samples mitigating the effect of label noise. We provide two variants of PermLL in this paper: one applies the permutation layer to the model's prediction, while the other applies it directly to the given noisy label. In addition, we provide a theoretical comparison between the two variants and show that previous methods can be seen as one of the variants. Finally, we validate PermLL experimentally and show that it achieves state-of-the-art performance on both real and synthetic datasets.

\end{abstract}
\section{Introduction}
Deep Neural Networks (DNNs) have achieved outstanding performance on many vision problems, including image classification \cite{krizhevsky2012imagenet}, object detection \cite{redmon2016you}, semantic segmentation \cite{long2015fully}, and scene labeling \cite{farabet2012learning}. The success in these challenges heavily relies on the availability of huge and correctly labeled datasets, which are very expensive and time-demanding to collect. To overcome this challenge, a number of crowdsourcing platforms such as Amazon’s Mechanical Turk and nonexpert sources such as Internet Web Images, where labels are inferred by surrounding text or keywords, have been developed over the past years \cite{xiao2015learning}. Although these methods reduce the labeling cost, the labels derived from these techniques are unreliable due to the high noise rate caused by human annotators or extraction algorithms \cite{paolacci2010running,scott2013classification}. In addition, more challenging tasks usually require domain expert annotators and thus are highly prone to mislabeling \cite{frenay2013classification,lloyd2004observer}, such as breast tumor classification \cite{lee2018cleannet}. 

Training on such noisy datasets negatively impacts the performance of DDNs (i.e., poor generalization) due to the memorization effect \cite{maennel2020neural}. Interestingly, the authors in \cite{zhang2021understanding} illustrated that DNNs can easily fit randomly labeled training data.
This property is especially problematic when training in the presence of label noise since typical DNNs tend to memorize the noisy instances leading to a subpar classifier. 

To overcome this impediment, we propose a learnable permutation layer that is applied to one of the training loss arguments (i.e., model prediction or label). 
Each training sample has an independent permutation layer associated with a learnable parameter $\boldsymbol{\alpha}$.
The purpose of the permutation layer is to correct the loss of noisy samples through permuting predictions or labels during training, allowing the model to learn safely from them.
During inference, the permutation layer is discarded. Our contributions can be summarized as follows:
\begin{enumerate}
    \item We propose a permutation layer learning framework PermLL that can effectively learn from datasets containing noisy labels.
    \item We theoretically analyze two variants of PermLL. The first approach applies the permutation layer to the predictions, while the second applies the permutation to the labels. We show that the approach of learning the labels directly, proposed in Joint Optimization \cite{tanaka2018joint}, can be seen as a special case of PermLL.
    \item We provide a theoretical analysis of the two proposed methods, showing that applying the permutation layer to the predictions has better theoretical properties. 
    \item We empirically demonstrate the effectiveness of PermLL on synthetic and real noise, achieving state-of-the-art performance on CIFAR-10, CIFAR-100, and Clothing1M.
\end{enumerate}

\section{Related Work}
In recent years, a considerable number of methods have been proposed to deal with different types of label noise. Label noise can come from a closed-set or an open-set. 
In the closed-set case, the true label of a noisy sample is guaranteed to belong to the dataset's predefined set of labels, while in the open-set noise the true label could fall outside the predefined labels. 
Under closed-set noise, the label noise can be further categorized into \emph{instance-independent label noise} and \emph{instance-dependent label noise} \cite{natarajan2013learning,xiao2015learning}.
For instance-independent label noise, the noise is conditionally independent on the sample feature given its clean label. Therefore, the label noise can be characterized as a transition matrix \cite{frenay2013classification}.
Instance-dependent label noise, on the other hand, depends on both sample features and the true label \cite{xiao2015learning}. In this paper, we only consider a closed-set noise setting, covering both instance-dependent and instance-independent noise. 

Many methods have been proposed to learn robustly from noisy labeled datasets. These methods can be categorized based on how they handle corrupted samples into sample selection and loss correction methods. The reader may refer to \cite{song2022learning, frenay2013classification} for a more thorough review.

\paragraph{Sample selection.} Sample selection methods provide a heuristic to isolate clean samples from noisy ones. Then, the noisy samples are either discarded or used as unlabeled samples in a semi-supervised manner \cite{malach2017decoupling, han2018co, yu2019does, li2020dividemix, song2021robust}. To name a few, Decouple \cite{malach2017decoupling} trains two networks on samples where they have disagreements in the prediction. Co-teaching \cite{han2018co} uses two networks, each selecting small-loss samples in a mini-batch to train the other. Inspired by Decouple, Co-teaching+ \cite{yu2019does} improves on Co-teaching by training two networks on samples that have small-loss and prediction disagreement. Unlike these methods, our work follows a loss correction approach.



\paragraph{Loss correction} Methods of this type cope with label noise by correcting the loss of noisy samples. Therefore, this line of work is more relevant to our approach. Several methods modify the loss function by augmenting their model with a linear layer during training. Some works utilize an estimated transition matrix to correct the loss \cite{patrini2017making, hendrycks2018using, xia2019anchor, yao2020dual}. Notably, \cite{patrini2017making} first estimates the noise transition matrix using the model's predictions in an initial training phase. Then, the model is re-trained with either a \emph{forward} or \emph{backward} correction. The performance of such methods highly depends on the quality of the estimated transition matrix.
Similarly, other methods augment their models with a noise adaptation layer \cite{chen2015webly,sukhbaatar2014training}. Unlike transition matrix-based methods, the adaptation layer is learned simultaneously with the model. A major drawback of the previously mentioned methods, using a transition matrix or an adaptation layer, is that they treat all samples equally. PermLL, however, employs an instance-dependent permutation layer that can deal with samples differently.

Another prominent line of work mitigates the effect of label noise by refurbishing the labels. For example, Bootstrapping \cite{reed2014training} generates training targets that are obtained by a convex combination of the model prediction and the noisy labels using a single parameter $\beta$ for all samples. To remedy this, \cite{arazo2019unsupervised} dynamically adjusts the parameter $\beta$ for every sample using a beta mixture model fitted to the training loss of each sample. Such bootstrapping techniques require careful hyperparameter scheduling, increasing their complexity. Joint Optimization \cite{tanaka2018joint} proposes a more straightforward approach that jointly learns the model's parameters and the labels. In addition, \cite{tanaka2018joint} uses regularization terms to impose a known prior distribution on the labels and to avoid getting stuck in local optima.
Inspired by \cite{tanaka2018joint}, PENCIL \cite{yi2019probabilistic} learns label distributions instead of constant labels. 

In contrast to the aforementioned literature, our proposed method employs a learnable instance-dependent permutation layer that is applied to one of the training loss arguments (i.e., model prediction or label). From the previously mentioned methods, Joint Optimization \cite{tanaka2018joint} is the most relevant to our work. However, instead of learning the labels, we learn instance-dependent permutation layers. In addition, we are different in several other aspects: i) our method works well without any regularization terms and thus has fewer hyperparameters. ii) Unlike \cite{tanaka2018joint}, we are not limited to datasets with only few classes. iii) \cite{tanaka2018joint} is, in fact, a special case of our formulation.


\section{Methodology}
We develop a learning scheme that is robust to label noise. In our treatment, we consider c-class classification problems. Given a noisy training data    $D := \big\{(\mathbf{x}^i, \Tilde{y}^i)\big\}_{i=1}^{N}$ where $\mathbf{x}^i \in \mathbb{R}^m$ are the input features, $\Tilde{y}^i \in \{1, ..., c\}$ is the noisy label, and $N$ is the number of training samples, we aim to learn a classifier $f_\theta$ on $D$ while mitigating the impact of label noise.
\paragraph{Notation.} Consider a classifier $f_\theta:\mathbb{R}^m \rightarrow p(c)$, where $p(c)$ is a probability simplex over c classes, i.e.,  $p(c)= \{\mathbf{z} \in \mathbb{R}^c \, | \, \mathbf{z} \geq \mathbf{0}, \, \mathbf{1}^T\mathbf{z} = 1\}$. Let $P(i,j)$ be an elementary permutation matrix that permutes the $i^{th}$ and $j^{th}$ components of a vector; obtained by permuting the $i^{th}$ and $j^{th}$ rows of the \emph{identity} matrix. In addition, let $\Tilde{y}^k$ and $y^k$ be the noisy and the underlying clean label of the $k^{th}$ sample, respectively. We denote the \emph{softmax} function by $S$, i.e., $S(\mathbf{z})_i = e^{\mathbf{z}_i}/\sum_j e^{\mathbf{z}_j}$. Let $\boldsymbol{e}_i$ be a standard basis vector with 1 in the $i^{th}$ coordinate and $0$ elsewhere. Vectors are denoted as bold letters (e.g., $\mathbf{z}$). The maximum and minimum values in the vector $\mathbf{z}$ are represented as $\mathbf{z}_{max}$ and $\mathbf{z}_{min}$. We use \emph{superscript} to denote the sample index (e.g., $\mathbf{x}^k, \Tilde{y}^k \text{ and } \boldsymbol{\boldsymbol{\alpha}}^k$).

\subsection{Instance-Dependent Permutation Layer}
\label{permutation_matrix_section}
To combat the effect of label noise, we propose adding an instance-dependent permutation layer, sometimes referred to simply as the permutation layer. In this section, we introduce the notion of an instance-dependent permutation layer and demonstrate how it is constructed for a given training sample.

\vspace{8pt}
\begin{definition} \label{perm_mat_def}
    For a training sample $(\mathbf{x}^k, \Tilde{y}^k)$, we define the  instance-dependent permutation layer $\permmatrix{k}$ as
    $$\permmatrix{k} = \sum_{i=1}^{c} S(\boldsymbol{\alpha}^{k})_{i} \, P(\Tilde{y}^{k}, i)$$
\end{definition}

The permutation layer for the $k^{th}$ training sample $\permmatrix{k}$ is constructed as a convex combination of the permutation matrices in the set $\{ P(\Tilde{y}^{k},1), P(\Tilde{y}^{k},2), ...,  P(\Tilde{y}^{k},c)\}$. This set consists of all \emph{single-swap} permutation matrices involving the element $\Tilde{y}_{k}$. The convex combination is obtained using the learnable parameter $\boldsymbol{\alpha}^k$. Ultimately, we want to learn $\boldsymbol{\alpha}^k$ such that $\permmatrix{k}$ reflects the noise affecting the true label $y^k$, i.e., $\permmatrix{k} = P(\Tilde{y}^k, y^k)$. 

\begin{figure*}[t!]
\centering
\resizebox{17cm}{!}{
\includegraphics[width=1
\textwidth]{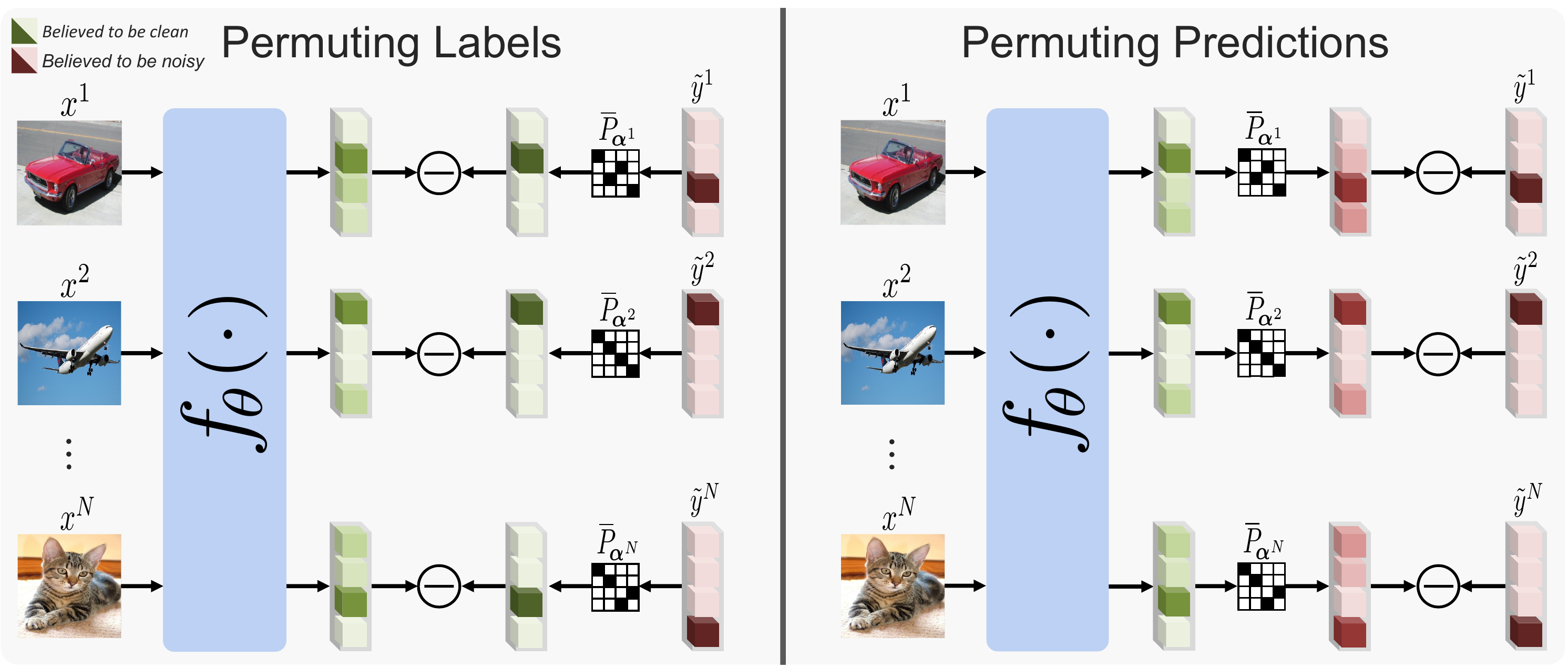}
}
\caption{The diagram illustrates the two variants of integrating the permutation layer. We assume that the noisy and underlying clean labels are: $\Tilde{y}^1=\boldsymbol{e}_3, y^1=\boldsymbol{e}_2, \Tilde{y}^2=y^2=\boldsymbol{e}_1, \Tilde{y}^N=\boldsymbol{e}_4, y^N=\boldsymbol{e}_3$, and that the instance-dependent permutation layers are learned successfully, i.e., $\permmatrix{k}=P(\Tilde{y}^k,y^k)$. When permuting labels, the loss is computed between the predictions and the corrected labels. However, when we instead permute the predictions, the loss is computed between the \emph{noisified} predictions $P(\Tilde{y}^k,y^k) \, f_\theta(x)$ and the noisy labels $\Tilde{y}^k$. Light and dark colors represent $0$ and $1$, respectively. }
\label{diagram}
\end{figure*}

\subsection{Permutation Layer Integration} \label{sec:Applying The Permutation Layer}
The proposed instance-dependent permutation layer alleviates the impact of label noise by correcting the loss. As demonstrated in Figure \ref{diagram}, we provide two approaches of incorporating the permutation layer into the learning procedure. Both approaches aim to capture the noise influencing $y^k$, i.e., $P(\Tilde{y}^k, y^k)$. The first approach applies the permutation layer to the noisy label $\boldsymbol{e}_{\Tilde{y}^k}$, as in equation \eqref{perm_on_label}. In this case, the permutation layer \emph{cleans} the noisy label, $P(\Tilde{y}^k, y^k) \, \boldsymbol{e}_{\Tilde{y}^k} = \boldsymbol{e}_{y^k}$. In the second approach, we instead apply the permutation layer to the model's prediction $f_\theta(\mathbf{x})$, as in equation \eqref{perm_on_pred}. Thus, the prediction $f_\theta(\mathbf{x}^k)$ is permuted to have a noise matching the noise on $y^k$, $P(\Tilde{y}^k, y^k)$. 
\begin{align}
    \pazocal{L}_1 = \frac{1}{N}\sum_{i=1}^{N}\ell(f_\theta(\mathbf{x}^i), \, \permmatrix{i} \, \boldsymbol{e}_{\Tilde{y}^i}) \label{perm_on_label} \\
    \pazocal{L}_2 = \frac{1}{N}\sum_{i=1}^{N}\ell(\permmatrix{i} f_\theta(\mathbf{x}^i), \, \boldsymbol{e}_{\Tilde{y}^i}) \label{perm_on_pred}
\end{align}

Note that these two approaches are analogues to the \emph{forward} and \emph{backward} corrections in~\cite{patrini2017making}. Looking closely at equation~\eqref{perm_on_label}, we can see that it is in fact equivalent to learning the label directly. In other words minimizing $\pazocal{L}_1$, with respect to $\theta$ and $\{\alpha^i\}_{i=1}^N$, is equivalent to minimizing $\sum_i \ell(f_\theta(\mathbf{x}^i), \, S(\boldsymbol{\alpha}^i))$. The approach of directly learning the label was first proposed in~\cite{tanaka2018joint}, and then adopted by \cite{yi2019probabilistic}. The following proposition demonstrates this equivalence by showing that applying a permutation to the label $\permmatrix{} \, \boldsymbol{e}_{\Tilde{y}^i}$ yields~$S(\boldsymbol{\alpha}^i)$.
\begin{prop} \label{equiv_of_permu_on_labels}
Given a training sample $(\mathbf{x}^k, \Tilde{y}^k)$, we have:
$$\permmatrix{k} \, \boldsymbol{e}_{\Tilde{y}^k} = S(\boldsymbol{\alpha}^k)$$
\end{prop}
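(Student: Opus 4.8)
The plan is to unfold the definition of the permutation layer from Definition~\ref{perm_mat_def} and exploit the single fact that an elementary permutation matrix $P(a,b)$, when applied to the standard basis vector $\boldsymbol{e}_a$, returns $\boldsymbol{e}_b$. Concretely, I would start by writing
$$\permmatrix{k}\,\boldsymbol{e}_{\Tilde{y}^k} = \sum_{i=1}^{c} S(\boldsymbol{\alpha}^k)_i\, P(\Tilde{y}^k, i)\,\boldsymbol{e}_{\Tilde{y}^k},$$
using linearity of matrix-vector multiplication so that the action distributes over the convex combination.

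The key step is then to evaluate $P(\Tilde{y}^k, i)\,\boldsymbol{e}_{\Tilde{y}^k}$ for each $i$. Since $P(\Tilde{y}^k, i)$ is obtained from the identity by swapping rows $\Tilde{y}^k$ and $i$, it sends $\boldsymbol{e}_{\Tilde{y}^k}$ to $\boldsymbol{e}_i$; this also covers the degenerate case $i = \Tilde{y}^k$, where $P(\Tilde{y}^k,\Tilde{y}^k)$ is the identity and the output $\boldsymbol{e}_{\Tilde{y}^k} = \boldsymbol{e}_i$ agrees with the general formula. Substituting back gives
$$\permmatrix{k}\,\boldsymbol{e}_{\Tilde{y}^k} = \sum_{i=1}^{c} S(\boldsymbol{\alpha}^k)_i\, \boldsymbol{e}_i,$$
and recognizing the right-hand side as the coordinate expansion of the vector $S(\boldsymbol{\alpha}^k)$ completes the argument.

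There is essentially no obstacle here: the statement is a direct computation, and the only thing worth stating carefully is the action of $P(\Tilde{y}^k,i)$ on $\boldsymbol{e}_{\Tilde{y}^k}$ (and checking the $i=\Tilde{y}^k$ edge case so the sum is uniform). If one wanted to be fully explicit, one could instead verify the identity componentwise: the $j$-th entry of $\permmatrix{k}\,\boldsymbol{e}_{\Tilde{y}^k}$ is $\sum_i S(\boldsymbol{\alpha}^k)_i\,[P(\Tilde{y}^k,i)]_{j,\Tilde{y}^k}$, and $[P(\Tilde{y}^k,i)]_{j,\Tilde{y}^k} = 1$ exactly when $j=i$ and $0$ otherwise, collapsing the sum to $S(\boldsymbol{\alpha}^k)_j$. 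Either route is short; I would present the basis-vector version for brevity.
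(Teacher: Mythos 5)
Your proposal is correct and follows essentially the same route as the paper's proof: expand $\permmatrix{k}\,\boldsymbol{e}_{\Tilde{y}^k}$ via Definition~\ref{perm_mat_def}, use $P(\Tilde{y}^k,i)\,\boldsymbol{e}_{\Tilde{y}^k}=\boldsymbol{e}_i$, and identify the resulting sum with $S(\boldsymbol{\alpha}^k)$. Your explicit handling of the $i=\Tilde{y}^k$ case and the optional componentwise check are minor elaborations of the same computation, not a different argument.
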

\vspace{5pt}
\begin{proof}
$\permmatrix{k} \, \boldsymbol{e}_{\Tilde{y}^k} = \sum_{i=1}^{c} S(\boldsymbol{\alpha}^k)_{i} \, P(\Tilde{y}^k, i) \, \boldsymbol{e}_{\Tilde{y}^k}$
\\
\begin{center}
$ \;\;\,\, = \sum_{i=1}^{c} S(\boldsymbol{\alpha}^k)_{i} \, \boldsymbol{e}_{i} \, = \, S(\boldsymbol{\alpha}^k)$
\end{center}\end{proof}
\vspace{2pt}

In the next section, we further analyze these two approaches, and analytically show that they possess different theoretical properties. 

\subsection{Permutation Layer Properties: Theoretical Analysis}
Although the two approaches of employing the permutation layer seem equivalent, they are in fact substantially different. In this section, we compare these approaches and analytically show that applying the permutation to the prediction has two key advantages: ($i$) The optimization does not get stuck in a local optimum after solving for the permutation layer parameters $\boldsymbol{\alpha}$. ($ii$) The norm of the gradient $||\nabla_{\!\boldsymbol{\alpha}} \pazocal{L}_2||$ is affected by the model's confidence, i.e., $\pazocal{L}_2$ is ``confidence-aware". 

\paragraph{Getting stuck in a local minimum.} 
As noted by \cite{tanaka2018joint}, alternating minimization of the KL-divergence loss with respect to the labels and the model parameters suffers from premature convergence to local optima. This stems from the fact that the minimum of the loss (i.e., $\pazocal{L} = 0$) is attained immediately after solving with respect to the labels.
In Proposition \ref{perm_on_label_gets_stuck}, we build on the observation in \cite{tanaka2018joint} and show that for \emph{any} loss function satisfying assumption \ref{loss_assumption}, the minimization of $\pazocal{L}_1$ gets stuck in a local minimum, i.e., the minimum is attained when solving for $\boldsymbol{\alpha}$. On the other hand, Proposition \ref{perm_on_pred_does_not_gets_stuck} shows that minimizing $\pazocal{L}_2$ does not stop prematurely, i.e., the minimum is \emph{not} attained when solving for~$\boldsymbol{\alpha}$.

\begin{assumption}\label{loss_assumption}
    Let $\ell(\mathbf{p},\mathbf{q}): p(c) \times p(c) \rightarrow \mathbb{R}^+$ , where $\, \mathbb{R}^+$ is the set $ \{ x \in \mathbb{R} | x \geq 0 \}$. We assume that $\ell(\mathbf{p},\mathbf{q})=0 \,$  if and only if  $ \, \mathbf{p} = \mathbf{q}$ .
\end{assumption}

\paragraph{Implications of Assumption \ref{loss_assumption}.} This assumption includes most well known loss functions such as p-norm loss functions and KL-divergence. However, the cross-entropy loss generally does not satisfy this assumption (unless $\max_{i} \mathbf{q}_i = 1$, which is the case in Proposition \ref{perm_on_pred_does_not_gets_stuck}).
\vspace{3pt}

\begin{prop}
\label{perm_on_label_gets_stuck}
Let $\ell$ be a loss function satisfying assumption \ref{loss_assumption}. Then, we have
\begin{align*}
    \ell(f_{\theta}(\mathbf{x}), \xoverline{P}_{\!\!\boldsymbol{\alpha}^{\ast}(\theta)} \, \boldsymbol{e}_{\Tilde{y}}) = 0
\end{align*}
where $\boldsymbol{\alpha}^{\ast}(\theta) = \underset{\boldsymbol{\alpha}}{\arg\min} \, \ell( f_{\theta}(\mathbf{x}), \, \permmatrix{} \, \boldsymbol{e}_{\Tilde{y}})$ 
\end{prop}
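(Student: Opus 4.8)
The plan is to collapse the statement onto Proposition~\ref{equiv_of_permu_on_labels} together with the characterization of $\ell$ in Assumption~\ref{loss_assumption}. First I would apply Proposition~\ref{equiv_of_permu_on_labels} to rewrite $\permmatrix{}\,\boldsymbol{e}_{\Tilde{y}} = S(\boldsymbol{\alpha})$, so that the inner problem becomes
\[
\boldsymbol{\alpha}^{\ast}(\theta) \;=\; \underset{\boldsymbol{\alpha}}{\arg\min}\; \ell\big(f_{\theta}(\mathbf{x}),\, S(\boldsymbol{\alpha})\big),
\]
and the claimed identity is precisely $\ell\big(f_{\theta}(\mathbf{x}),\, S(\boldsymbol{\alpha}^{\ast}(\theta))\big) = 0$. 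By Assumption~\ref{loss_assumption} the loss is non-negative, so it suffices to show the objective attains the value $0$, i.e.\ to exhibit one $\boldsymbol{\alpha}$ with $\ell(f_{\theta}(\mathbf{x}), S(\boldsymbol{\alpha})) = 0$.

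Next I would construct such an $\boldsymbol{\alpha}$ directly. Since $f_{\theta}$ maps into the simplex $p(c)$ and --- as is standard for a softmax output layer --- has strictly positive entries, the vector $\boldsymbol{\alpha}$ with $\boldsymbol{\alpha}_i = \log f_{\theta}(\mathbf{x})_i$ is well defined, and the shift-invariance and normalization of the softmax give $S(\boldsymbol{\alpha}) = f_{\theta}(\mathbf{x})$. The ``if'' direction of Assumption~\ref{loss_assumption} (namely $\mathbf{p}=\mathbf{q}\Rightarrow \ell(\mathbf{p},\mathbf{q})=0$) then gives $\ell(f_{\theta}(\mathbf{x}), S(\boldsymbol{\alpha})) = 0$. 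Combined with non-negativity, this shows the infimum of the inner problem equals $0$ and is attained at this $\boldsymbol{\alpha}$ --- hence, modulo the additive freedom of the softmax, at $\boldsymbol{\alpha}^{\ast}(\theta)$ --- which is exactly the statement.

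The only delicate point, and the one I expect to be the main obstacle, is the boundary of the simplex: if $f_{\theta}(\mathbf{x})$ has a vanishing coordinate then no $\boldsymbol{\alpha}$ satisfies $S(\boldsymbol{\alpha}) = f_{\theta}(\mathbf{x})$, because the image of the softmax is the open simplex, and the $\arg\min$ need not be attained. I would handle this by adopting the setting implicit throughout the paper, where $f_{\theta}$ ends in a softmax and therefore lands in the interior of $p(c)$, so the preimage exists and the minimizer is genuine; alternatively one can restate the claim with an infimum and take a sequence $\boldsymbol{\alpha}^{(n)}$ with $S(\boldsymbol{\alpha}^{(n)}) \to f_{\theta}(\mathbf{x})$, along which $\ell \to 0$ by continuity of $\ell$. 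The first route is cleaner and matches how the permutation layer is actually used.
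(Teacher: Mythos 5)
Your proposal is correct and takes essentially the same route as the paper's proof: reduce via Proposition~\ref{equiv_of_permu_on_labels} to minimizing $\ell\big(f_{\theta}(\mathbf{x}), S(\boldsymbol{\alpha})\big)$, observe that the minimizer satisfies $S\big(\boldsymbol{\alpha}^{\ast}(\theta)\big) = f_{\theta}(\mathbf{x})$, and conclude by Assumption~\ref{loss_assumption}. Your explicit choice $\boldsymbol{\alpha}_i = \log f_{\theta}(\mathbf{x})_i$ and the remark about the simplex boundary merely spell out an attainment point the paper leaves implicit, which is harmless under its standing assumption that $f_{\theta}$ ends in a softmax and so maps into the interior of $p(c)$.
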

\begin{proof}
    From proposition \ref{equiv_of_permu_on_labels}, we have $\permmatrix{} \, \boldsymbol{e}_{\Tilde{y}} = S(\boldsymbol{\alpha})$. Then, 
    \begin{align*}
        &\boldsymbol{\alpha}^{\ast}(\theta) = \underset{\boldsymbol{\alpha}}{\arg\min} \, \ell( f_{\theta}(\mathbf{x}), \, S(\boldsymbol{\alpha})) \\
        \Rightarrow \; &S\big(\boldsymbol{\alpha}^{\ast}(\theta)\big) = f_{\theta}(\mathbf{x})
    \end{align*}
    Finally, by assumption \ref{loss_assumption} we have $$\ell(f_{\theta}(\mathbf{x}), \xoverline{P}_{\!\!\boldsymbol{\alpha}^{\ast}(\theta)} \, \boldsymbol{e}_{\Tilde{y}}) = \ell(f_{\theta}(\mathbf{x}), f_{\theta}(\mathbf{x})) = 0 $$
\end{proof}

\begin{prop} \label{perm_on_pred_does_not_gets_stuck}
Let $\ell$ be a loss function satisfying assumption \ref{loss_assumption}, and $\theta_i$ be some choice for the model's parameter such that $\underset{j}{max} \, [f_{\theta_i}(\mathbf{x})]_j < 1$. Then, we have
\begin{align*}
    \ell(\xoverline{P}_{\!\!\boldsymbol{\alpha}^{\ast}(\theta_i)} \, f_{\theta_i}(\mathbf{x}), \, \boldsymbol{e}_{\Tilde{y}}) > 0
\end{align*}
where $\boldsymbol{\alpha}^{\ast}(\theta_i) = \underset{\boldsymbol{\alpha}}{\arg\min} \, \ell( \permmatrix{} \, f_{\theta_i}(\mathbf{x}), \, \boldsymbol{e}_{\Tilde{y}})$ 
\end{prop}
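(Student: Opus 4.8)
The plan is to prove the stronger statement that $\permmatrix{} f_{\theta_i}(\mathbf{x}) \neq \boldsymbol{e}_{\Tilde{y}}$ for \emph{every} choice of $\boldsymbol{\alpha}$; the claimed inequality then follows at once from Assumption \ref{loss_assumption}, specialized to $\boldsymbol{\alpha} = \boldsymbol{\alpha}^{\ast}(\theta_i)$. First I would unfold Definition \ref{perm_mat_def} to write
\[
\permmatrix{} f_{\theta_i}(\mathbf{x}) \;=\; \sum_{j=1}^{c} S(\boldsymbol{\alpha})_j \, P(\Tilde{y}, j)\, f_{\theta_i}(\mathbf{x}),
\]
and note that, since $S(\boldsymbol{\alpha}) \in p(c)$, the right-hand side is a convex combination of the $c$ vectors $P(\Tilde{y}, j)\, f_{\theta_i}(\mathbf{x})$.

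The crux is the elementary observation that $P(\Tilde{y}, j)$ merely swaps coordinates $\Tilde{y}$ and $j$, so $P(\Tilde{y}, j)\, f_{\theta_i}(\mathbf{x})$ has exactly the same multiset of entries as $f_{\theta_i}(\mathbf{x})$; in particular its largest coordinate equals $[f_{\theta_i}(\mathbf{x})]_{max}$, which is $< 1$ by hypothesis. A convex combination of vectors each of whose coordinates is bounded above by $[f_{\theta_i}(\mathbf{x})]_{max}$ has all of its coordinates bounded above by $[f_{\theta_i}(\mathbf{x})]_{max} < 1$. Hence no coordinate of $\permmatrix{} f_{\theta_i}(\mathbf{x})$ equals $1$, so this vector cannot be the one-hot vector $\boldsymbol{e}_{\Tilde{y}}$, whatever $\boldsymbol{\alpha}$ is. (Equivalently: $\permmatrix{}$ is doubly stochastic, hence it maps $p(c)$ into itself without ever increasing the maximum coordinate.) Applying Assumption \ref{loss_assumption} then gives $\ell(\permmatrix{} f_{\theta_i}(\mathbf{x}), \boldsymbol{e}_{\Tilde{y}}) > 0$ for all $\boldsymbol{\alpha}$, and in particular for $\boldsymbol{\alpha}^{\ast}(\theta_i)$.

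I do not expect a genuine obstacle: the whole argument rests on the single fact that a one-swap permutation preserves the multiset of entries of a probability vector, together with the convexity of the mixing. The only point worth a sentence of care is the status of $\boldsymbol{\alpha}^{\ast}(\theta_i)$: since the bound is uniform over $\boldsymbol{\alpha}$, it is irrelevant whether the $\arg\min$ is actually attained (the softmax image being the open simplex, it may not be) — any minimizing, or even near-minimizing, $\boldsymbol{\alpha}$ yields a strictly positive loss, which is all the statement asserts. It is also instructive to contrast this with Proposition \ref{perm_on_label_gets_stuck}: there $\permmatrix{} \boldsymbol{e}_{\Tilde{y}} = S(\boldsymbol{\alpha})$ ranges over the entire open simplex and can in particular be made equal to $f_{\theta}(\mathbf{x})$, whereas here the reachable set $\{\permmatrix{} f_{\theta_i}(\mathbf{x})\}$ is confined to the slab $\{\mathbf{z} \in p(c) : \mathbf{z}_{max} \le [f_{\theta_i}(\mathbf{x})]_{max}\}$, which contains no vertex of the simplex — precisely why the minimum of $\pazocal{L}_2$ over $\boldsymbol{\alpha}$ is not zero.
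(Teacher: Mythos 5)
Your proof is correct and takes essentially the same route as the paper's: both argue that the coordinates of $\permmatrix{}\, f_{\theta_i}(\mathbf{x})$ (the paper checks only the $\Tilde{y}$-th one, which it writes as $S(\boldsymbol{\alpha}^\ast(\theta_i))^T f_{\theta_i}(\mathbf{x})$) are convex combinations of entries of $f_{\theta_i}(\mathbf{x})$ and hence bounded by $\max_j [f_{\theta_i}(\mathbf{x})]_j < 1$, so the permuted prediction cannot equal $\boldsymbol{e}_{\Tilde{y}}$, after which Assumption \ref{loss_assumption} gives strict positivity. Your additional remark that the bound is uniform in $\boldsymbol{\alpha}$, making attainment of the $\arg\min$ irrelevant, is a minor but sensible refinement the paper does not spell out.
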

\begin{proof}
We start by showing that the two arguments of $\ell$ are not equal. It is enough to show that $\big[\xoverline{P}_{\!\!\boldsymbol{\alpha}^{\ast}(\theta_i)} \, f_{\theta_i}(\mathbf{x})\big]_{\Tilde{y}} < [\boldsymbol{e}_{\Tilde{y}}]_{\Tilde{y}} = 1$.
\begin{align*}
    &\big[\xoverline{P}_{\!\!\boldsymbol{\alpha}^{\ast}(\theta_i)} \, f_{\theta_i}(\mathbf{x})\big]_{\Tilde{y}} = \boldsymbol{e}_{\Tilde{y}}^T \, \xoverline{P}_{\!\!\boldsymbol{\alpha}^{\ast}(\theta_i)} \, f_{\theta_i}(\mathbf{x}) \\ &= \boldsymbol{e}_{\Tilde{y}}^T \, \sum_{j=1}^{C} S\big(\boldsymbol{\alpha}^\ast(\theta_i)\big)_{j} \, P(\Tilde{y}, j) \, f_{\theta_i}(\mathbf{x}) \\
    &= \sum_{j=1}^{C} S\big(\boldsymbol{\alpha}^\ast(\theta_i)\big)_{j} \, \boldsymbol{e}_j^T \, f_{\theta_i}(\mathbf{x}) \\ &= S\big(\boldsymbol{\alpha}^\ast(\theta_i)\big)^T \, f_{\theta_i}(\mathbf{x}) \leq \underset{j}{\max} \, [f_{\theta_i}(\mathbf{x})]_j < 1.
\end{align*}
Therefore, $$\xoverline{P}_{\!\!\boldsymbol{\alpha}^{\ast}(\theta_i)} \, f_{\theta_i}(\mathbf{x}) \neq \boldsymbol{e}_{\Tilde{y}}$$
Finally, by assumption \ref{loss_assumption}, $ \, \ell(\xoverline{P}_{\!\!\boldsymbol{\alpha}^{\ast}(\theta_i)} \, f_{\theta_i}(\mathbf{x}), \, \boldsymbol{e}_{\Tilde{y}}) > 0$
\end{proof}

\vspace{10pt}

\noindent Note that for classifiers with a softmax layer, i.e., $f_{\theta_i}(\mathbf{x}) = S(h_{\theta_i}(\mathbf{x}))$, $\,\underset{j}{\max} \, [f_{\theta_i}(\mathbf{x})]_j < 1\,$ for all $\theta_i$.

\begin{figure*}[t!]
\centering
\includegraphics[width=1
\textwidth]{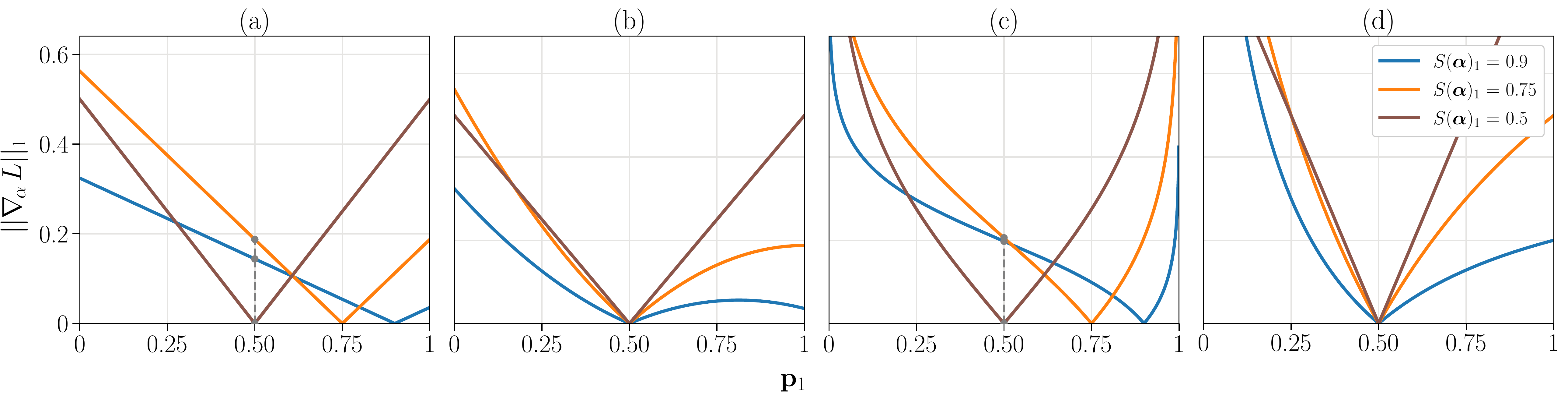} 
\caption{The relationship between $|| \nabla_{\!\boldsymbol{\alpha}} L ||_1$ and the confidence of the first class $\mathbf{p}_1$ over different choices of $\mathbf{\boldsymbol{\alpha}}$ and $\Tilde{y}=1$.
Figures (a,b) use the $L_2$ norm whereas (c,d) use the KL-divergence. In figures (a,c) the permutation layer is applied to the label, i.e., $L=\ell(\mathbf{p}, \permmatrix{}\boldsymbol{e}_{\Tilde{y}})$. In figures (b,d) the permutation is applied to the prediction, i.e., $L=\ell(\permmatrix{}\mathbf{p}, \boldsymbol{e}_{\Tilde{y}})$. This analysis shows that \emph{only} when the permutation layer is applied to the prediction, $|| \nabla_{\!\boldsymbol{\alpha}} L ||_1$ decreases as the confidence decrease and is zero when $\mathbf{p}=[0.5, 0.5]$. However, for the case of applying the permutation to the label, the gradient is zero only when $\mathbf{p} = S(\boldsymbol{\alpha})$.}
\label{Confidence_aware_fig}
\end{figure*}

\paragraph{Confidence-aware.} Another advantage of applying the permutation to the prediction is that the gradient norm $||\nabla_{\!\boldsymbol{\alpha}} \pazocal{L}_2||$ is proportional to the confidence of the classifier's prediction, as in Proposition \ref{Confidence_aware_prop}. With this approach, unconfident predictions cause minimal updates on $\boldsymbol{\alpha}$, while confident ones prompt more notable updates. We measure the confidence of a prediction $f_\theta(\mathbf{x})$ using $f_\theta(\mathbf{x})_{max}-f_\theta(\mathbf{x})_{min}$. We believe this behavior to be generally desirable, especially for classifiers that provide \emph{well-calibrated} predictions during training \cite{thulasidasan2019mixup}. In well-calibrated classifiers, the predicted scores are indicative of the actual likelihood of correctness \cite{guo2017calibration}.

\begin{assumption}
\label{loss_assumption_2}
    Let $\ell(\mathbf{p},\mathbf{q}): p(c) \times p(c) \rightarrow \mathbb{R}^+$ , where $\, \mathbb{R}^+$ is the set $ \{ x \in \mathbb{R} | x \geq 0 \}$, be a loss function satisfying $||\nabla_{\!\mathbf{p}} \, \ell(\mathbf{p},\mathbf{q})||_1 \leq M$ when $\, \mathbf{p}_{max} - \mathbf{p}_{min} <~1/c$ .
    
    \vspace{5pt}
    \noindent Where $M \in \mathbb{R}^+$, $\mathbf{p}$ and $\mathbf{q}$ represent the prediction and the target, respectively.
\end{assumption}

\paragraph{Remark on Assumption \ref{loss_assumption_2}} This assumption is clearly satisfied for $p$-\emph{norm} loss functions with $p\geq1$. It is also satisfied for the cross-entropy loss and KL-divergence; the proof is provided in the \emph{supplementary material}.

\vspace{5pt}

\begin{prop}
\label{Confidence_aware_prop}
Let $\ell$ be a loss function satisfying assumption \ref{loss_assumption_2} and the loss for a given training sample $(\mathbf{x}^k, \Tilde{y}^k)$ be $L=\ell(\permmatrix{k} f_\theta(\mathbf{x}^k), \, \boldsymbol{e}_{\Tilde{y}^k})$. Then, $\nabla_{\!\boldsymbol{\alpha}^k} L$ satisfies:
\begin{align*}
    || \nabla_{\!\boldsymbol{\alpha}^k} L ||_1\leq \frac{c \, M}{4} \big[f_\theta(\mathbf{x}^k)_{max}-f_\theta(\mathbf{x}^k)_{min}\big] \\ \text{if} \quad \big[f_\theta(\mathbf{x})_{max}-f_\theta(\mathbf{x})_{min}\big] < 1/c \,.
\end{align*}
where $c$ is the number of classes.
\end{prop}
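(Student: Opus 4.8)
The plan is to differentiate $L = \ell(\mathbf{p},\boldsymbol{e}_{\Tilde{y}^k})$ through the chain $\boldsymbol{\alpha}^k \mapsto S(\boldsymbol{\alpha}^k) \mapsto \mathbf{p} := \permmatrix{k} f_\theta(\mathbf{x}^k) \mapsto L$, bound each block, and then sum the resulting per-coordinate estimate over the $c$ coordinates of $\boldsymbol{\alpha}^k$. The factor $c/4$ in the claim will appear as ``$c$ coordinates, each contributing a $\tfrac14$'', the $\tfrac14$ being the uniform bound on the entries of the softmax Jacobian that show up.

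First I would make sure Assumption~\ref{loss_assumption_2} can actually be invoked on the inner argument $\mathbf{p}$. Writing $f := f_\theta(\mathbf{x}^k)$ and using that $\permmatrix{k}$, as a convex combination of permutation matrices, is doubly stochastic, each $\mathbf{p}_i$ is a convex combination of the entries of $f$; hence $f_{min}\le\mathbf{p}_i\le f_{max}$, so $\mathbf{p}_{max}-\mathbf{p}_{min}\le f_{max}-f_{min}<1/c$. Therefore Assumption~\ref{loss_assumption_2} gives $\|\mathbf{g}\|_1\le M$ for $\mathbf{g}:=\nabla_{\mathbf{p}}\ell(\mathbf{p},\boldsymbol{e}_{\Tilde{y}^k})$.

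Next, the chain rule gives $\partial L/\partial\boldsymbol{\alpha}^k_r=\langle\mathbf{g},\,\partial\mathbf{p}/\partial\boldsymbol{\alpha}^k_r\rangle$ (well defined regardless of which ambient representative of $\mathbf{g}$ is used, since $\partial\mathbf{p}/\partial\boldsymbol{\alpha}^k_r\perp\mathbf{1}$), so Hölder's inequality in the $\ell_\infty$--$\ell_1$ pairing yields $|\partial L/\partial\boldsymbol{\alpha}^k_r|\le\|\mathbf{g}\|_1\max_i|\partial\mathbf{p}_i/\partial\boldsymbol{\alpha}^k_r|\le M\max_i|\partial\mathbf{p}_i/\partial\boldsymbol{\alpha}^k_r|$. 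The core step is the per-coordinate bound $|\partial\mathbf{p}_i/\partial\boldsymbol{\alpha}^k_r|\le\tfrac14(f_{max}-f_{min})$ for every $i,r$. For this I would use the identity $P(\Tilde{y}^k,i)\,f=f+(f_{\Tilde{y}^k}-f_i)(\boldsymbol{e}_i-\boldsymbol{e}_{\Tilde{y}^k})$, which collapses $\mathbf{p}$ to the closed forms $\mathbf{p}_i=f_i+(f_{\Tilde{y}^k}-f_i)S(\boldsymbol{\alpha}^k)_i$ for $i\neq\Tilde{y}^k$ and $\mathbf{p}_{\Tilde{y}^k}=\sum_j f_j S(\boldsymbol{\alpha}^k)_j$. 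Differentiating and substituting the softmax Jacobian $\partial S(\boldsymbol{\alpha}^k)_i/\partial\boldsymbol{\alpha}^k_r=S(\boldsymbol{\alpha}^k)_i(\delta_{ir}-S(\boldsymbol{\alpha}^k)_r)$, the case $i\neq\Tilde{y}^k$ reduces to the elementary bounds $S(\boldsymbol{\alpha}^k)_r(1-S(\boldsymbol{\alpha}^k)_r)\le\tfrac14$ and $S(\boldsymbol{\alpha}^k)_i S(\boldsymbol{\alpha}^k)_r\le\tfrac14$, while the case $i=\Tilde{y}^k$ gives $\partial\mathbf{p}_{\Tilde{y}^k}/\partial\boldsymbol{\alpha}^k_r=S(\boldsymbol{\alpha}^k)_r\big(f_r-\sum_j f_j S(\boldsymbol{\alpha}^k)_j\big)$ and is handled by the cancellation $\big|f_r-\sum_j f_j S(\boldsymbol{\alpha}^k)_j\big|=\big|\sum_{j\neq r}S(\boldsymbol{\alpha}^k)_j(f_r-f_j)\big|\le(1-S(\boldsymbol{\alpha}^k)_r)(f_{max}-f_{min})$, so $|\partial\mathbf{p}_{\Tilde{y}^k}/\partial\boldsymbol{\alpha}^k_r|\le S(\boldsymbol{\alpha}^k)_r(1-S(\boldsymbol{\alpha}^k)_r)(f_{max}-f_{min})\le\tfrac14(f_{max}-f_{min})$.

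Summing over $r=1,\dots,c$ then gives $\|\nabla_{\boldsymbol{\alpha}^k}L\|_1=\sum_{r=1}^c|\partial L/\partial\boldsymbol{\alpha}^k_r|\le c\cdot M\cdot\tfrac14(f_{max}-f_{min})$, which is exactly the claim. I expect the main obstacle to be the $i=\Tilde{y}^k$ case of the Jacobian bound: because $\mathbf{p}_{\Tilde{y}^k}$ mixes all coordinates of $f$, its derivative looks a priori as though it could scale like $S(\boldsymbol{\alpha}^k)_r(f_{max}-f_{min})$, and the non-obvious point is that the deviation $|f_r-\langle f,S(\boldsymbol{\alpha}^k)\rangle|$ is itself no larger than $(1-S(\boldsymbol{\alpha}^k)_r)(f_{max}-f_{min})$, which restores the $\tfrac14$ through $S(\boldsymbol{\alpha}^k)_r(1-S(\boldsymbol{\alpha}^k)_r)\le\tfrac14$. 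A secondary care point is step one, namely checking that the spread hypothesis of Assumption~\ref{loss_assumption_2} transfers through the doubly stochastic layer; the remaining estimates are routine.
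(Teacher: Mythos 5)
Your proof is correct and follows essentially the same route as the paper's: chain rule through the softmax, the $\tfrac14$ factor from $S(\boldsymbol{\alpha}^k)_r(1-S(\boldsymbol{\alpha}^k)_r)$ (or $S(\boldsymbol{\alpha}^k)_iS(\boldsymbol{\alpha}^k)_r$), an $\ell_1$--$\ell_\infty$ H\"older pairing with $\|\nabla_{\mathbf{p}}\ell\|_1\le M$ from Assumption~\ref{loss_assumption_2}, and a final sum over the $c$ coordinates; the only difference is that you bound $\partial\mathbf{p}_i/\partial\boldsymbol{\alpha}^k_r$ coordinate-by-coordinate via closed forms for $\mathbf{p}=\permmatrix{k}f_\theta(\mathbf{x}^k)$, whereas the paper bounds the whole vector $\partial\Tilde{f}/\partial\boldsymbol{\alpha}^k_j$ entrywise using the identity $\sum_i\frac{\partial S(\boldsymbol{\alpha}^k)_i}{\partial\boldsymbol{\alpha}^k_j}P(\Tilde{y}^k,i)f=S(\boldsymbol{\alpha}^k)_j\sum_{i\neq j}S(\boldsymbol{\alpha}^k)_i\,[P(\Tilde{y}^k,j)f-P(\Tilde{y}^k,i)f]$. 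A welcome extra in your write-up is the explicit check that $\mathbf{p}_{max}-\mathbf{p}_{min}\le f_\theta(\mathbf{x}^k)_{max}-f_\theta(\mathbf{x}^k)_{min}<1/c$ (via double stochasticity of the permutation layer), which is needed to invoke Assumption~\ref{loss_assumption_2} at the permuted prediction and is left implicit in the paper.
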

\begin{proof}
We start by taking the partial derivative of $L$ with respect to $\boldsymbol{\alpha}^k_j$, where $1\leq j \leq c$. To simplify notation, let $\Tilde{f} = \permmatrix{k} f_\theta(\mathbf{x}^k)$
\begin{align*}
    \Big|\frac{\partial L}{\partial \boldsymbol{\alpha}^k_j}\Big| &= \Big|\nabla_{\!\Tilde{f}}\, L \boldsymbol{\cdot} \frac{\partial \Tilde{f}}{\partial \boldsymbol{\alpha}^k_j}\Big| \\ & \leq \Big|\nabla_{\!\Tilde{f}} \, L\Big| \boldsymbol{\cdot} \Big|\frac{\partial \Tilde{f}}{\partial \boldsymbol{\alpha}^k_j}\Big| \tag{3.1}\label{eq:3.1}
\end{align*}
Now, we obtain an upper bound for $\Big|\frac{\partial \Tilde{f}}{\partial \boldsymbol{\alpha}^k_j}\Big|$
\begin{align*}
    \Big|\frac{\partial \Tilde{f}}{\partial \boldsymbol{\alpha}^k_j}\Big| &= \Big|\sum_{i=1}^{c} \frac{\partial S(\boldsymbol{\alpha}^k)_{i}}{\partial \boldsymbol{\alpha}^k_j} P(\Tilde{y}^k,i) \; f_\theta(\mathbf{x}^k)\Big| \\
    &= \Big|\sum_{\substack{i=1 \\ i \neq j}}^{c} S(\boldsymbol{\alpha}^k)_j S(\boldsymbol{\alpha}^k)_i [P(\Tilde{y}^k,j) f_\theta(\mathbf{x}^k) \\[-13pt]
    & \qquad \qquad \quad -  P(\Tilde{y}^k,i) f_\theta(\mathbf{x}^k)]\Big| 
\end{align*}
Using the triangle inequality and the fact that $\big|P(\Tilde{y}^k,j) f_\theta(\mathbf{x}^k) -  P(\Tilde{y}^k,i) f_\theta(\mathbf{x}^k)\big| \leq \big[f_\theta(\mathbf{x}^k)_{max}-f_\theta(\mathbf{x}^k)_{min}\big] \, \mathbf{1}\,$, we have
\begin{align*}
    \Big|\frac{\partial \Tilde{f}}{\partial \boldsymbol{\alpha}^k_j}\Big| & \leq \sum_{\substack{i=1 \\ i \neq j}}^{c} S(\boldsymbol{\alpha}^k)_j S(\boldsymbol{\alpha}^k)_i \big[f_\theta(\mathbf{x}^k)_{max}-f_\theta(\mathbf{x}^k)_{min}\big] \mathbf{1} \\
    &= S(\boldsymbol{\alpha}^k)_j (1-S(\boldsymbol{\alpha}^k)_j) \big[f_\theta(\mathbf{x}^k)_{max}-f_\theta(\mathbf{x}^k)_{min}\big] \mathbf{1} \\
    & \leq \frac{1}{4} \big[f_\theta(\mathbf{x}^k)_{max}-f_\theta(\mathbf{x}^k)_{min}\big] \mathbf{1} \tag{3.2}\label{eq:3.2}
\end{align*}

\noindent Combining $\eqref{eq:3.1}$ and $\eqref{eq:3.2}$, the partial derivative of $L$ is
\begin{align*}
    \Big|\frac{\partial L}{\partial \boldsymbol{\alpha}^k_j}\Big| & \leq \frac{1}{4} \big[f_\theta(\mathbf{x}^k)_{max}-f_\theta(\mathbf{x}^k)_{min}\big] \, \big|\big| \nabla_{\!\Tilde{f}} \, L \big|\big|_1
\end{align*}

\noindent By Assumption \ref{loss_assumption_2}, we conclude that
\begin{align*}
    || \nabla_{\!\boldsymbol{\alpha}^k} L ||_1 &\leq \frac{c}{4} \big[f_\theta(\mathbf{x}^k)_{max}-f_\theta(\mathbf{x}^k)_{min}\big] \, \big|\big| \nabla_{\!\Tilde{f}} \, L \big|\big|_1 \\
    &\leq \frac{c \, M}{4} \big[f_\theta(\mathbf{x}^k)_{max}-f_\theta(\mathbf{x}^k)_{min}\big]
\end{align*} 
\end{proof}

The result in proposition \ref{Confidence_aware_prop} can be motivated intuitively by the following example.

\paragraph{Example:}
Consider a training sample $(\mathbf{x}^k, \Tilde{y}^k)$ where the classifier's prediction has equal probabilities $f(\mathbf{x}^k)= ~ \frac{1}{c} ~ \mathbf{1}$, or equivalently $f(\mathbf{x}^k)_{max}-f(\mathbf{x}^k)_{min} = 0$. Then, it is easy to see that $\ell(\permmatrix{1}f(\mathbf{x}^k),\, \Tilde{y}^k) = \ell(\permmatrix{2}f(\mathbf{x}^k),\, \Tilde{y}^k)$ for all $\boldsymbol{\alpha}^1, \boldsymbol{\alpha}^2 \in \mathbb{R}^c$. This is obvious since applying the permutation layer on $\sfrac{1}{c} ~ \mathbf{1}$ does not change its value, e.g.,
$\permmatrix{1}\, \frac{1}{c} \, \mathbf{1} = \sum_{i=1}^{c} S(\boldsymbol{\alpha}^{1})_{i} \, P(\Tilde{y}^{k}, i) \, \frac{1}{c} \, \mathbf{1} = \sum_{i=1}^{c} S(\boldsymbol{\alpha}^{1})_{i} \, \frac{1}{c} \, \mathbf{1} =\frac{1}{c} \, \mathbf{1}$
Finally, we can conclude that $\nabla_{\!\boldsymbol{\alpha}} \, \ell(\permmatrix{} f(\mathbf{x}^k), \Tilde{y}^k) = \mathbf{0}$.

\vspace{3pt}

Figure \ref{Confidence_aware_fig} illustrates the confidence-aware property of $\pazocal{L}_2$ by plotting the relationship between $|| \nabla_{\!\boldsymbol{\alpha}} L ||_1$ and the confidence of the first class $\mathbf{p}_1$, in a binary classification problem. This also shows that $\pazocal{L}_1$ lacks this property.

\subsection{Permutation Layer Learning}
We simultaneously learn the model's parameters $\theta$ and the parameters of the permutation layer $\{ \boldsymbol{\alpha}^i \}_{i=1}^{N}$ using gradient descent.
In fact, the optimization problem has a closed-form solution with respect to  $\{ \boldsymbol{\alpha}^i \}_{i=1}^{N}$. However, similar to \cite{yi2019probabilistic}, we use gradient descent to allow for gradual changes of the parameters and to take advantage of the confidence-aware property described in proposition \ref{Confidence_aware_prop}. The permutation layer parameters, $\{ \boldsymbol{\alpha}^i \}_{i=1}^{N}$, are updated as follows:
\[
\boldsymbol{\alpha}^i[t+1] \gets \boldsymbol{\alpha}^i[t] - \myPermLR{} \nabla_{\!\boldsymbol{\alpha}^i} \pazocal{L}
\]
for $1 \leq i \leq N$, where $\myPermLR{}$ is the learning rate of the permutation layer.
The initialization of the permutation layer weights is determined by the hyperparameter $\alphaInit{}$. 
Precisely, $\alphaInit{}$ affects the initialization as follows:  
\[
S({\boldsymbol{\alpha}}^i)_j = 
\left\{
    \begin{array}{ll}
        \alphaInit{}, & \text{if } j=\Tilde y^i \\[1em]
        \frac{1-\alphaInit{}}{c-1} , & \text{if } j\neq \Tilde y^i
    \end{array}
\right.,
\]
where $\frac{1}{c}<\alphaInit{}<1$. We avoid setting $\alphaInit{}$ very close to 1 to  allow for a more desirable gradient flow, i.e., avoiding vanishing gradients. The effects of the choices of  $\myPermLR{}$ and $\alphaInit{}$ are analyzed in section \ref{sections:hanalysis}. Note that the permutation layer is discarded at inference time.

In the previous sections, we have established the similarity between permuting the labels and Joint Optimization \cite{tanaka2018joint}, and that permuting the predictions has more appealing theoretical properties. Therefore, in the next section, we empirically analyze PermLL when the permutation is applied to the prediction.

\begin{table*}[h!]
\centering
\resizebox{0.80900\textwidth}{!}{%
\begin{tabular}{clcccccc}
\hline
\multirow{2}{*}{\textbf{Dataset}} & \multicolumn{1}{c}{\multirow{2}{*}{\textbf{Method}}} & \multicolumn{4}{c}{\textit{Symmetric noise}} & \multicolumn{2}{c}{\textit{Aysmmetric noise}} \\
                                   &                 & 20\%         & 40\%         & 60\%         & \multicolumn{1}{c|}{80\%}         & 20\%         & 40\%         \\ \cline{1-8} 
\multirow{7}{*}{\textbf{CIFAR10}}  & CE              & 86.98 ± 0.12 & 81.88 ± 0.29 & 74.14 ± 0.56 & \multicolumn{1}{c|}{53.82 ± 1.04} & 88.59 ± 0.34 & 80.11 ± 1.44 \\
                                   & Bootstrap \cite{reed2014training}      & 86.23 ± 0.23 & 82.23 ± 0.37 & 75.12 ± 0.56 & \multicolumn{1}{c|}{54.12 ± 1.32} & 88.26 ± 0.24 & 81.21 ± 1.47 \\
                                   & Forward \cite{patrini2017making}        & 87.99 ± 0.36 & 83.25 ± 0.38 & 74.96 ± 0.65 & \multicolumn{1}{c|}{54.64 ± 0.44} & 89.09 ± 0.47 & 83.55 ± 0.58 \\
                                   & GCE    \cite{zhang2018generalized}         & 89.83 ± 0.20 & 87.13 ± 0.22 & 82.54 ± 0.23 & \multicolumn{1}{c|}{64.07 ± 1.38} & 89.33 ± 0.17 & 76.74 ± 0.61 \\
                                   & Joint \cite{tanaka2018joint}           & 92.25 & 90.79 & 86.87 & \multicolumn{1}{c|}{69.16} & - & - \\
                                   & SL \cite{wang2019symmetric}             & 89.83 ± 0.32 & 87.13 ± 0.26 & 82.81 ± 0.61 & \multicolumn{1}{c|}{68.12 ± 0.81} & 90.44 ± 0.27 & 82.51 ± 0.45 \\
                                   & ELR \cite{liu2020early}          & 91.16 ± 0.08 & 89.15 ± 0.17 & 86.12 ± 0.49 & \multicolumn{1}{c|}{73.86 ± 0.61} & \textbf{93.52 ± 0.23} & 90.12 ± 0.47 \\
                                   & SELC \cite{SELC}           & 93.09 ± 0.02 & 91.18 ± 0.06 & 87.25 ± 0.09 & \multicolumn{1}{c|}{74.13 ± 0.14} & - & \textbf{91.05 ± 0.11} \\
                                   & \textbf{PermLL} & \textbf{93.17 ± 0.11} & \textbf{91.30 ± 0.23} & \textbf{87.94  ± 0.21} & \multicolumn{1}{c|}{\textbf{75.83  ± 0.56}}        & 92.16 ± 0.15 & 86.09 ± 0.23         \\ \hline
\multirow{7}{*}{\textbf{CIFAR100}} & CE              & 58.72 ± 0.26 & 48.20 ± 0.65 & 37.41 ± 0.94 & \multicolumn{1}{c|}{18.10 ± 0.8}  & 59.20 ± 0.18 & 42.74 ± 0.61 \\
                                   & Bootstrap \cite{reed2014training}      & 58.27 ± 0.21 & 47.66 ± 0.55 & 34.68 ± 1.1  & \multicolumn{1}{c|}{21.64 ± 0.97} & 62.14 ± 0.32 & 45.12 ± 0.57 \\
                                   & Forward \cite{patrini2017making}     & 39.19 ± 2.61 & 31.05 ± 1.44 & 19.12 ± 1.95 & \multicolumn{1}{c|}{8.99 ± 0.58}  & 42.46 ± 2.16 & 34.44 ± 1.93 \\
                                   & GCE \cite{zhang2018generalized}         & 66.81 ± 0.42 & 61.77 ± 0.24 & 53.16 ± 0.78 & \multicolumn{1}{c|}{29.16 ± 0.74} & 66.59 ± 0.22 & 47.22 ± 1.15 \\
                                   & Joint \cite{tanaka2018joint}          & 58.15 & 54.81 & 47.94 & \multicolumn{1}{c|}{17.18} & - & -\\
                                   & Pencil \cite{yi2019probabilistic}         & 73.86 ± 0.34 & 69.12 ± 0.62 & 57.70 ± 3.86 & \multicolumn{1}{c|}{fail} & - & 63.61 ± 0.23\\
                                   & SL \cite{wang2019symmetric}             & 70.38 ± 0.13 & 62.27 ± 0.22 & 54.82 ± 0.57 & \multicolumn{1}{c|}{25.91 ± 0.44} & 72.56 ± 0.22 & 69.32 ± 0.87 \\
                                   & ELR    \cite{liu2020early}         & 74.21 ± 0.22 & 68.28 ± 0.31 & 59.28 ± 0.67 & \multicolumn{1}{c|}{29.78 ± 0.56} & \textbf{74.03 ± 0.31} & \textbf{73.26 ± 0.64} \\
                                   & SELC  \cite{SELC}          & 73.63 ± 0.07 & 68.46 ± 0.10 & 59.41 ± 0.06 & \multicolumn{1}{c|}{32.63 ± 0.06} & - &  70.82 ± 0.09 \\
                                   & \textbf{PermLL} & \textbf{74.35  ± 0.34} & \textbf{71.37  ± 0.36} & \textbf{65.58 ± 0.37} & \multicolumn{1}{c|}{\textbf{45.81 ± 0.49}} & 72.66 ± 0.13 & 51.60 ± 0.11 \\
    \hline
\end{tabular}%

        }

\caption{
Results on CIFAR10 and CIFAR100 on symmetric and asymmetric label noise with different noise levels. We report the mean and standard deviation test accuracy of our method over three random initializations. All methods use Resnet-34. Results for SL, GCE, Forward, Bootstrap are taken from \cite{liu2020early}, results from Joint are taken from \cite{SELC}, and other results are taken from their respective papers. The best results are highlighted in bold. 
}
\label{Table:Exp1}
\end{table*}

\begin{table}[]
\resizebox{\columnwidth}{!}{%
\begin{tabular}{@{}lclc@{}}
\toprule
\multicolumn{1}{c}{\textbf{Method}} & \textbf{Result} & \multicolumn{1}{c}{\textbf{Method}} & \textbf{Result} \\ \midrule
CE      & 69.10 & ELR+* \cite{liu2020early}          & 74.81 \\
Forward \cite{reed2014training} & 69.84 & DivideMix* \cite{li2020dividemix}  & 74.76 \\
Pencil  \cite{yi2019probabilistic}& 73.49 & UniCon* \cite{karim2022unicon} & 74.98 \\
JNPL  \cite{kim2021joint}& 74.15 & MLNT  \cite{li2019learning} & 73.47 \\
Joint \cite{tanaka2018joint}& 72.16 & \textbf{PermLL} & \textbf{74.99} \\ \bottomrule
\end{tabular}%
}
\caption{Test accuracy results on Clothing-1M compared to state-of-the-art methods. PermLL outperforms all other methods despite it's simplicity. Methods denoted with * use heavy augmentations (e.g mixup), semi-supervised training and multiple networks to achieve their results}
\label{Table:Clothing}
\end{table}

\section{Experiments}
\subsection{Datasets}
To evaluate the effectiveness of PermLL, we use three standard image classification benchmarks: CIFAR-10, CIFAR-100 \cite{CIFAR} and Clothing-1M \cite {CLOTH}. 
Since the CIFAR10/100 datasets are known to be clean, we simulate a noisy learning setting by injecting synthetic noise into the training labels. Two types of synthetic label noise are used: symmetric and asymmetric noise. For the symmetric noise, we take a fraction of the labels and flip them to one of the c classes with a uniform random probability. Asymmetric noise attempts to mimic the mistakes that naturally occur by annotators in the real world. We follow \cite{Patrini_2017_CVPR} for injecting the asymmetric noise in CIFAR-10: truck$\,\to\,$autombile, deer$\,\to\,$horse, bird$\,\to\,$airplane, cat $\leftrightarrow$ dog. In the case of asymmetric noise for CIFAR-100, we flip labels of subclasses within randomly selected superclasses cyclically with a total of 20 superclasses. Following \cite{liu2020early, tanaka2018joint}, we hold out 10\% of the training data as a validation set.

Clothing1M, on the other hand, is a large dataset of images from online clothing shops containing realistic noise. The noise level is estimated to be around 38.5\% \cite{CLOTH}. This dataset contains four sets: a noisy training set (1M samples), a clean training set (50K samples), a clean validation set (14K samples), and a clean testing set (10K samples). In our experiments, we completely exclude the 50k samples in the clean training set as PermLL does not assume the availability of a clean training set.  

\vspace{5pt}

\begin{figure*}
    \centering
    \includegraphics[width=1
    \textwidth]{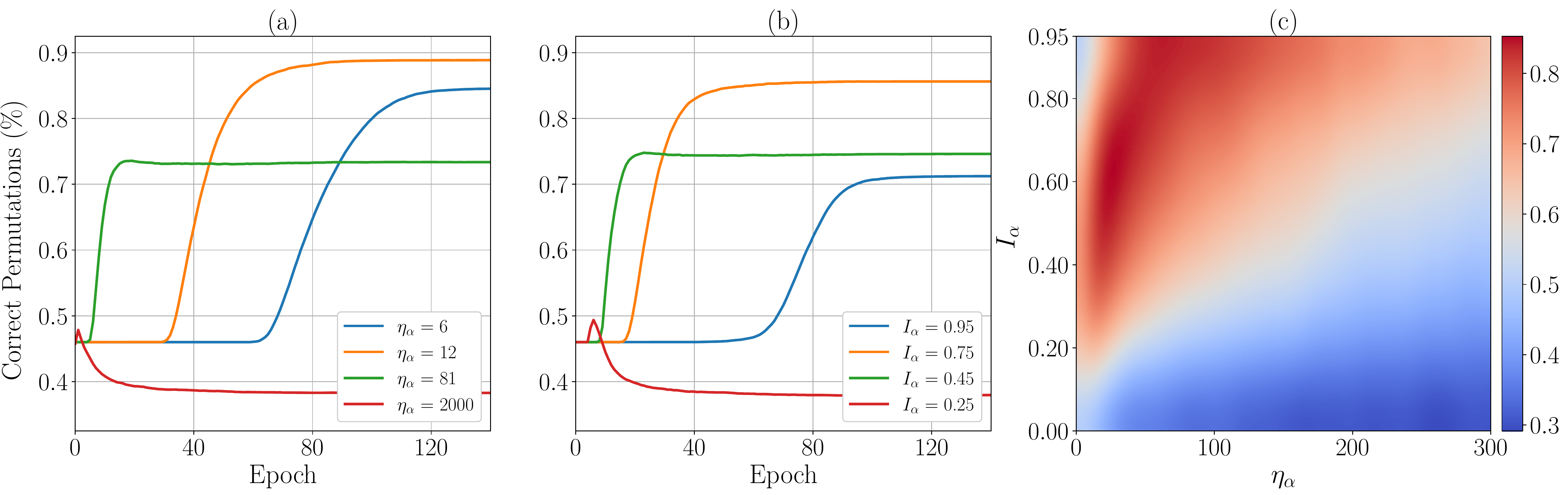}
    \caption{
    The effect of hyperparameter choices on the percentage of correctly learned permutation layers on CIFAR-10 with 60\% symmetric noise.
    (a) Shows the effect of changing the permutation learning rate $\myPermLR{}$ on the percentage of correct permutations when $\alphaInit{}$ = 0.75.
    (b) Shows the effect of changing $\alphaInit{}$ when $\myPermLR{}$ is set to 24. 
    (c) Shows a smoothed heatmap of the percentage of correctly learned permutations at the end of training as a function of $\myPermLR{}$ and $\alphaInit{}$. 
    }
    \label{fig:lr_alpha_on_corrected}
\end{figure*}

\subsection{Implementation}
\label{section:Implementation}

For CIFAR-10 we use a ResNet34 \cite{he2016deep} and an SGD optimizer with learning rate of 0.02, momentum of 0.9, a weight decay of 0.0005 and a batch size 128. The model is trained for 120 epochs and the learning rate is decayed by a factor of 10 at epochs 80 and 100.
Additionally, we apply standard augmentation and data preprocessing: padding followed by random cropping, random horizontal flip, and data normalization, similar to \cite{liu2020early, tanaka2018joint, yi2019probabilistic}. The same setup is used for CIFAR-100 except that the weight decay used is 0.001, and the learning rate decays once by a factor of 10 at epoch 100.
For all noise settings in CIFAR10 we set $\alphaInit{} = 0.35$, and $\myPermLR{} = 1.5$. For CIFAR100 we use $\alphaInit{} = 0.225$, with $\myPermLR{} = 3$ for symmetric noise and $\myPermLR{} = 6$ for asymmetric noise.

For Clothing1M we use a pretrained Resnet50 from TorchVision \cite{NEURIPS2019_9015} and an SGD optimizer with learning rate of 0.001, momentum of 0.9, a weight decay of 0.001, and a batch size of 64. The model is trained for 15 epochs and the learning rate is decayed by a factor of 10 at epochs 5, 10 and 13, with $\alphaInit{} = 0.275$ and $\myPermLR{} = 80$. 
Standard augmentation and data preprocessing are applied; the images are normalized, resized to 256x256, randomly cropped to 224x224, and randomly horizontally flipped.
Similar to previous methods \cite{liu2020early, li2020dividemix}, during training, we sample balanced batches (based on the noisy labels) as Clothing1M contains significant class imbalance. All experiments were implemented using PyTorch \cite{NEURIPS2019_9015} and were run on NVIDIA V100 and NVIDIA RTX A6000 GPUs.




\subsection{Results}
Table \ref{Table:Exp1} shows the results of PermLL on CIFAR10/100 with symmetric and asymmetric synthetic noise. We compare our results with state-of-the-art methods that use a similar setup (e.g., same architecture and augmentation). To ensure fairness, we use the same architecture, batch size, and augmentation as other baselines when possible (as described in \ref{section:Implementation}). PermLL consistently outperforms all other methods in symmetric noise. This is especially clear in the high noise CIFAR100 setting, in which PermLL achieves an improvement of $\sim$6\% and $\sim$12\% in terms of test accuracy for 60\% and 80\% symmetric noise, respectively, while using the same hyperparameters for all noise percentages. In the case of simulated asymmetric noise, our method attains comparable performance at 20\% noise. In the 40\% simulated asymmetric noise, our method achieves poor performance compared to the baselines. 
However, we observe that PermLL does not face the same issue when faced with real asymmetric instance-dependent noise. 

To validate the efficacy of PermLL in real noise environments, we use the Clothing-1M dataset as a benchmark. We report the test accuracy in Table \ref{Table:Clothing} of PermLL and compare it with state-of-the-art methods. We achieve high performance and outperform all baselines, slightly outperforming UniCon. It is worth noting that the comparison might not be entirely fair as methods denoted with (*), unlike PermLL, use semi-supervised training, heavy augmentation, and multiple networks to achieve their results. Nevertheless, PermLL remarkably achieves very competitive results despite its relative simplicity, demonstrating its practicality and strength.


\subsection{Hyperparameter Analysis}\label{sections:hanalysis}
In this section, we analyze the effect of PermLL hyperparameters on learning correct permutation layers, similar to \cite{tanaka2018joint, yi2019probabilistic}.
Recall that, we ultimately want to learn $\boldsymbol{\alpha}^i$ such that $\permmatrix{i} = P(\Tilde{y}^i, y^i)$ for all $i$, where $\Tilde{y}^i$ and $y^i$ are the clean and noisy labels, respectively.
Therefore, we consider a permutation layer to be correct if $\arg\max_j \boldsymbol{\alpha}^i_j = y^i$ for all $i$.

Figure \ref{fig:lr_alpha_on_corrected} (a) shows the impact of changing the permutation layer's learning rate $\myPermLR{}$ on the percentage of correct permutations throughout training.
A small $\myPermLR{}$ delays or halts learning the permutation layers, causing the model to memorize the noisy labels.
In contrast, a large $\myPermLR{}$ results in early and erroneous permutations. 
Furthermore, an extremely large $\myPermLR{}$ reduces the percentage of correct permutations below its initial value, as early wrong predictions cause harmful large updates on the permutation layer parameters. Similarly in Figure \ref{fig:lr_alpha_on_corrected} (b), we study the effect of changing $\alphaInit{}$. The effect appears to be inversely similar to the effect of changing $\myPermLR{}$.
Analogous to the case of a high $\myPermLR{}$, extremely small $\alphaInit{}$ significantly reduces the permutation accuracy. Figure \ref{fig:lr_alpha_on_corrected} (c) illustrates the relationship between $\alphaInit{}$ and $\myPermLR{}$, showing their influence on the percentage of correct permutations over a larger scale. The percentage of correct permutations suffers the most when $\alphaInit{}$ is low and $\myPermLR{}$ is high. We can also notice that as we increase $\myPermLR{}$ we also need to increase $\alphaInit{}$ to maintain a good permutation accuracy. 





\section{Conclusion}
We proposed PermLL, a loss correction framework for learning from noisy labels. PermLL generalizes the idea of learning the labels directly, proposed in previous methods, by using instance-dependent permutation layers that can be applied to either the labels or the predictions. We theoretically investigated the differences between the two approaches and demonstrated that applying the permutation layer to the predictions has advantageous properties. Additionally, we experimentally validated the effectiveness of PermLL on synthetic and real datasets by achieving state-of-the-art performance. Finally, we analyzed the effect of adjusting the hyperparameters of PermLL on the percentage of correctly learned permutation layers.

\bibliography{main.bib}

\begin{thebibliography}{43}
\providecommand{\natexlab}[1]{#1}

\bibitem[{Arazo et~al.(2019)Arazo, Ortego, Albert, O’Connor, and
  McGuinness}]{arazo2019unsupervised}
Arazo, E.; Ortego, D.; Albert, P.; O’Connor, N.; and McGuinness, K. 2019.
\newblock Unsupervised label noise modeling and loss correction.
\newblock In \emph{International conference on machine learning}, 312--321.
  PMLR.

\bibitem[{Chen and Gupta(2015)}]{chen2015webly}
Chen, X.; and Gupta, A. 2015.
\newblock Webly supervised learning of convolutional networks.
\newblock In \emph{Proceedings of the IEEE international conference on computer
  vision}, 1431--1439.

\bibitem[{Farabet et~al.(2012)Farabet, Couprie, Najman, and
  LeCun}]{farabet2012learning}
Farabet, C.; Couprie, C.; Najman, L.; and LeCun, Y. 2012.
\newblock Learning hierarchical features for scene labeling.
\newblock \emph{IEEE transactions on pattern analysis and machine
  intelligence}, 35: 1915--1929.

\bibitem[{Fr{\'e}nay and Verleysen(2013)}]{frenay2013classification}
Fr{\'e}nay, B.; and Verleysen, M. 2013.
\newblock Classification in the presence of label noise: a survey.
\newblock \emph{IEEE transactions on neural networks and learning systems}, 25:
  845--869.

\bibitem[{Guo et~al.(2017)Guo, Pleiss, Sun, and
  Weinberger}]{guo2017calibration}
Guo, C.; Pleiss, G.; Sun, Y.; and Weinberger, K.~Q. 2017.
\newblock On calibration of modern neural networks.
\newblock In \emph{International conference on machine learning}, 1321--1330.
  PMLR.

\bibitem[{Han et~al.(2018)Han, Yao, Yu, Niu, Xu, Hu, Tsang, and
  Sugiyama}]{han2018co}
Han, B.; Yao, Q.; Yu, X.; Niu, G.; Xu, M.; Hu, W.; Tsang, I.; and Sugiyama, M.
  2018.
\newblock Co-teaching: Robust training of deep neural networks with extremely
  noisy labels.
\newblock \emph{Advances in neural information processing systems}, 31.

\bibitem[{He et~al.(2016)He, Zhang, Ren, and Sun}]{he2016deep}
He, K.; Zhang, X.; Ren, S.; and Sun, J. 2016.
\newblock Deep residual learning for image recognition.
\newblock In \emph{Proceedings of the IEEE conference on computer vision and
  pattern recognition}, 770--778.

\bibitem[{Hendrycks et~al.(2018)Hendrycks, Mazeika, Wilson, and
  Gimpel}]{hendrycks2018using}
Hendrycks, D.; Mazeika, M.; Wilson, D.; and Gimpel, K. 2018.
\newblock Using trusted data to train deep networks on labels corrupted by
  severe noise.
\newblock \emph{Advances in neural information processing systems}, 31.

\bibitem[{Karim et~al.(2022)Karim, Rizve, Rahnavard, Mian, and
  Shah}]{karim2022unicon}
Karim, N.; Rizve, M.~N.; Rahnavard, N.; Mian, A.; and Shah, M. 2022.
\newblock UNICON: Combating Label Noise Through Uniform Selection and
  Contrastive Learning.
\newblock In \emph{Proceedings of the IEEE/CVF Conference on Computer Vision
  and Pattern Recognition}, 9676--9686.

\bibitem[{Kim et~al.(2021)Kim, Yun, Shon, and Kim}]{kim2021joint}
Kim, Y.; Yun, J.; Shon, H.; and Kim, J. 2021.
\newblock Joint negative and positive learning for noisy labels.
\newblock In \emph{Proceedings of the IEEE/CVF Conference on Computer Vision
  and Pattern Recognition}, 9442--9451.

\bibitem[{Krizhevsky, Hinton et~al.(2009)}]{CIFAR}
Krizhevsky, A.; Hinton, G.; et~al. 2009.
\newblock Learning multiple layers of features from tiny images.

\bibitem[{Krizhevsky, Sutskever, and Hinton(2012)}]{krizhevsky2012imagenet}
Krizhevsky, A.; Sutskever, I.; and Hinton, G.~E. 2012.
\newblock Imagenet classification with deep convolutional neural networks.
\newblock \emph{Advances in neural information processing systems}, 25.

\bibitem[{Lee et~al.(2018)Lee, He, Zhang, and Yang}]{lee2018cleannet}
Lee, K.-H.; He, X.; Zhang, L.; and Yang, L. 2018.
\newblock Cleannet: Transfer learning for scalable image classifier training
  with label noise.
\newblock In \emph{Proceedings of the IEEE conference on computer vision and
  pattern recognition}, 5447--5456.

\bibitem[{Li, Socher, and Hoi(2020)}]{li2020dividemix}
Li, J.; Socher, R.; and Hoi, S.~C. 2020.
\newblock Dividemix: Learning with noisy labels as semi-supervised learning.
\newblock \emph{arXiv preprint arXiv:2002.07394}.

\bibitem[{Li et~al.(2019)Li, Wong, Zhao, and Kankanhalli}]{li2019learning}
Li, J.; Wong, Y.; Zhao, Q.; and Kankanhalli, M.~S. 2019.
\newblock Learning to learn from noisy labeled data.
\newblock In \emph{Proceedings of the IEEE/CVF Conference on Computer Vision
  and Pattern Recognition}, 5051--5059.

\bibitem[{Liu et~al.(2020)Liu, Niles-Weed, Razavian, and
  Fernandez-Granda}]{liu2020early}
Liu, S.; Niles-Weed, J.; Razavian, N.; and Fernandez-Granda, C. 2020.
\newblock Early-Learning Regularization Prevents Memorization of Noisy Labels.
\newblock \emph{Advances in Neural Information Processing Systems}, 33.

\bibitem[{Lloyd et~al.(2004)Lloyd, Erickson, Casey, Lam, Lohse, Asa, Chan,
  DeLellis, Harach, Kakudo et~al.}]{lloyd2004observer}
Lloyd, R.~V.; Erickson, L.~A.; Casey, M.~B.; Lam, K.~Y.; Lohse, C.~M.; Asa,
  S.~L.; Chan, J.~K.; DeLellis, R.~A.; Harach, H.~R.; Kakudo, K.; et~al. 2004.
\newblock Observer variation in the diagnosis of follicular variant of
  papillary thyroid carcinoma.
\newblock \emph{The American journal of surgical pathology}, 28: 1336--1340.

\bibitem[{Long, Shelhamer, and Darrell(2015)}]{long2015fully}
Long, J.; Shelhamer, E.; and Darrell, T. 2015.
\newblock Fully convolutional networks for semantic segmentation.
\newblock In \emph{Proceedings of the IEEE conference on computer vision and
  pattern recognition}, 3431--3440.

\bibitem[{Lu and He(2022)}]{SELC}
Lu, Y.; and He, W. 2022.
\newblock SELC: Self-Ensemble Label Correction Improves Learning with Noisy
  Labels.
\newblock In Raedt, L.~D., ed., \emph{Proceedings of the Thirty-First
  International Joint Conference on Artificial Intelligence, {IJCAI-22}},
  3278--3284. International Joint Conferences on Artificial Intelligence
  Organization.
\newblock Main Track.

\bibitem[{Maennel et~al.(2020)Maennel, Alabdulmohsin, Tolstikhin, Baldock,
  Bousquet, Gelly, and Keysers}]{maennel2020neural}
Maennel, H.; Alabdulmohsin, I.~M.; Tolstikhin, I.~O.; Baldock, R.; Bousquet,
  O.; Gelly, S.; and Keysers, D. 2020.
\newblock What do neural networks learn when trained with random labels?
\newblock \emph{Advances in Neural Information Processing Systems}, 33:
  19693--19704.

\bibitem[{Malach and Shalev-Shwartz(2017)}]{malach2017decoupling}
Malach, E.; and Shalev-Shwartz, S. 2017.
\newblock Decoupling" when to update" from" how to update".
\newblock \emph{Advances in neural information processing systems}, 30.

\bibitem[{Natarajan et~al.(2013)Natarajan, Dhillon, Ravikumar, and
  Tewari}]{natarajan2013learning}
Natarajan, N.; Dhillon, I.~S.; Ravikumar, P.~K.; and Tewari, A. 2013.
\newblock Learning with noisy labels.
\newblock \emph{Advances in neural information processing systems}, 26.

\bibitem[{Paolacci, Chandler, and Ipeirotis(2010)}]{paolacci2010running}
Paolacci, G.; Chandler, J.; and Ipeirotis, P.~G. 2010.
\newblock Running experiments on amazon mechanical turk.
\newblock \emph{Judgment and Decision making}, 5: 411--419.

\bibitem[{Paszke et~al.(2019)Paszke, Gross, Massa, Lerer, Bradbury, Chanan,
  Killeen, Lin, Gimelshein, Antiga, Desmaison, Kopf, Yang, DeVito, Raison,
  Tejani, Chilamkurthy, Steiner, Fang, Bai, and Chintala}]{NEURIPS2019_9015}
Paszke, A.; Gross, S.; Massa, F.; Lerer, A.; Bradbury, J.; Chanan, G.; Killeen,
  T.; Lin, Z.; Gimelshein, N.; Antiga, L.; Desmaison, A.; Kopf, A.; Yang, E.;
  DeVito, Z.; Raison, M.; Tejani, A.; Chilamkurthy, S.; Steiner, B.; Fang, L.;
  Bai, J.; and Chintala, S. 2019.
\newblock PyTorch: An Imperative Style, High-Performance Deep Learning Library.
\newblock In Wallach, H.; Larochelle, H.; Beygelzimer, A.; d\textquotesingle
  Alch\'{e}-Buc, F.; Fox, E.; and Garnett, R., eds., \emph{Advances in Neural
  Information Processing Systems 32}, 8024--8035. Curran Associates, Inc.

\bibitem[{Patrini et~al.(2017{\natexlab{a}})Patrini, Rozza, Krishna~Menon,
  Nock, and Qu}]{patrini2017making}
Patrini, G.; Rozza, A.; Krishna~Menon, A.; Nock, R.; and Qu, L.
  2017{\natexlab{a}}.
\newblock Making deep neural networks robust to label noise: A loss correction
  approach.
\newblock In \emph{Proceedings of the IEEE conference on computer vision and
  pattern recognition}, 1944--1952.

\bibitem[{Patrini et~al.(2017{\natexlab{b}})Patrini, Rozza, Krishna~Menon,
  Nock, and Qu}]{Patrini_2017_CVPR}
Patrini, G.; Rozza, A.; Krishna~Menon, A.; Nock, R.; and Qu, L.
  2017{\natexlab{b}}.
\newblock Making Deep Neural Networks Robust to Label Noise: A Loss Correction
  Approach.
\newblock In \emph{Proceedings of the IEEE Conference on Computer Vision and
  Pattern Recognition (CVPR)}.

\bibitem[{Redmon et~al.(2016)Redmon, Divvala, Girshick, and
  Farhadi}]{redmon2016you}
Redmon, J.; Divvala, S.; Girshick, R.; and Farhadi, A. 2016.
\newblock You only look once: Unified, real-time object detection.
\newblock In \emph{Proceedings of the IEEE conference on computer vision and
  pattern recognition}, 779--788.

\bibitem[{Reed et~al.(2014)Reed, Lee, Anguelov, Szegedy, Erhan, and
  Rabinovich}]{reed2014training}
Reed, S.; Lee, H.; Anguelov, D.; Szegedy, C.; Erhan, D.; and Rabinovich, A.
  2014.
\newblock Training deep neural networks on noisy labels with bootstrapping.
\newblock \emph{arXiv preprint arXiv:1412.6596}.

\bibitem[{Scott, Blanchard, and Handy(2013)}]{scott2013classification}
Scott, C.; Blanchard, G.; and Handy, G. 2013.
\newblock Classification with asymmetric label noise: Consistency and maximal
  denoising.
\newblock In \emph{Conference on learning theory}, 489--511.

\bibitem[{Song et~al.(2021)Song, Kim, Park, Shin, and Lee}]{song2021robust}
Song, H.; Kim, M.; Park, D.; Shin, Y.; and Lee, J.-G. 2021.
\newblock Robust learning by self-transition for handling noisy labels.
\newblock In \emph{Proceedings of the 27th ACM SIGKDD Conference on Knowledge
  Discovery \& Data Mining}, 1490--1500.

\bibitem[{Song et~al.(2022)Song, Kim, Park, Shin, and Lee}]{song2022learning}
Song, H.; Kim, M.; Park, D.; Shin, Y.; and Lee, J.-G. 2022.
\newblock Learning from noisy labels with deep neural networks: A survey.
\newblock \emph{IEEE Transactions on Neural Networks and Learning Systems}.

\bibitem[{Sukhbaatar et~al.(2014)Sukhbaatar, Bruna, Paluri, Bourdev, and
  Fergus}]{sukhbaatar2014training}
Sukhbaatar, S.; Bruna, J.; Paluri, M.; Bourdev, L.; and Fergus, R. 2014.
\newblock Training convolutional networks with noisy labels.
\newblock \emph{arXiv preprint arXiv:1406.2080}.

\bibitem[{Tanaka et~al.(2018)Tanaka, Ikami, Yamasaki, and
  Aizawa}]{tanaka2018joint}
Tanaka, D.; Ikami, D.; Yamasaki, T.; and Aizawa, K. 2018.
\newblock Joint optimization framework for learning with noisy labels.
\newblock In \emph{Proceedings of the IEEE conference on computer vision and
  pattern recognition}, 5552--5560.

\bibitem[{Thulasidasan et~al.(2019)Thulasidasan, Chennupati, Bilmes,
  Bhattacharya, and Michalak}]{thulasidasan2019mixup}
Thulasidasan, S.; Chennupati, G.; Bilmes, J.~A.; Bhattacharya, T.; and
  Michalak, S. 2019.
\newblock On mixup training: Improved calibration and predictive uncertainty
  for deep neural networks.
\newblock \emph{Advances in Neural Information Processing Systems}, 32.

\bibitem[{Wang et~al.(2019)Wang, Ma, Chen, Luo, Yi, and
  Bailey}]{wang2019symmetric}
Wang, Y.; Ma, X.; Chen, Z.; Luo, Y.; Yi, J.; and Bailey, J. 2019.
\newblock Symmetric cross entropy for robust learning with noisy labels.
\newblock In \emph{Proceedings of the IEEE/CVF International Conference on
  Computer Vision}, 322--330.

\bibitem[{Xia et~al.(2019)Xia, Liu, Wang, Han, Gong, Niu, and
  Sugiyama}]{xia2019anchor}
Xia, X.; Liu, T.; Wang, N.; Han, B.; Gong, C.; Niu, G.; and Sugiyama, M. 2019.
\newblock Are anchor points really indispensable in label-noise learning?
\newblock \emph{Advances in Neural Information Processing Systems}, 32.

\bibitem[{Xiao et~al.(2015{\natexlab{a}})Xiao, Xia, Yang, Huang, and
  Wang}]{xiao2015learning}
Xiao, T.; Xia, T.; Yang, Y.; Huang, C.; and Wang, X. 2015{\natexlab{a}}.
\newblock Learning from massive noisy labeled data for image classification.
\newblock In \emph{Proceedings of the IEEE conference on computer vision and
  pattern recognition}, 2691--2699.

\bibitem[{Xiao et~al.(2015{\natexlab{b}})Xiao, Xia, Yang, Huang, and
  Wang}]{CLOTH}
Xiao, T.; Xia, T.; Yang, Y.; Huang, C.; and Wang, X. 2015{\natexlab{b}}.
\newblock Learning from massive noisy labeled data for image classification.
\newblock In \emph{Proceedings of the IEEE conference on computer vision and
  pattern recognition}, 2691--2699.

\bibitem[{Yao et~al.(2020)Yao, Liu, Han, Gong, Deng, Niu, and
  Sugiyama}]{yao2020dual}
Yao, Y.; Liu, T.; Han, B.; Gong, M.; Deng, J.; Niu, G.; and Sugiyama, M. 2020.
\newblock Dual t: Reducing estimation error for transition matrix in
  label-noise learning.
\newblock \emph{Advances in neural information processing systems}, 33:
  7260--7271.

\bibitem[{Yi and Wu(2019)}]{yi2019probabilistic}
Yi, K.; and Wu, J. 2019.
\newblock Probabilistic end-to-end noise correction for learning with noisy
  labels.
\newblock In \emph{Proceedings of the IEEE/CVF Conference on Computer Vision
  and Pattern Recognition}, 7017--7025.

\bibitem[{Yu et~al.(2019)Yu, Han, Yao, Niu, Tsang, and Sugiyama}]{yu2019does}
Yu, X.; Han, B.; Yao, J.; Niu, G.; Tsang, I.; and Sugiyama, M. 2019.
\newblock How does disagreement help generalization against label corruption?
\newblock In \emph{International Conference on Machine Learning}, 7164--7173.
  PMLR.

\bibitem[{Zhang et~al.(2021)Zhang, Bengio, Hardt, Recht, and
  Vinyals}]{zhang2021understanding}
Zhang, C.; Bengio, S.; Hardt, M.; Recht, B.; and Vinyals, O. 2021.
\newblock Understanding deep learning (still) requires rethinking
  generalization.
\newblock \emph{Communications of the ACM}, 64: 107--115.

\bibitem[{Zhang and Sabuncu(2018)}]{zhang2018generalized}
Zhang, Z.; and Sabuncu, M. 2018.
\newblock Generalized cross entropy loss for training deep neural networks with
  noisy labels.
\newblock \emph{Advances in neural information processing systems}, 31.

\end{thebibliography}

\end{document}